\newcommand{\etal}{\emph{et~al.}}
\newtheorem{theorem}{Theorem}
\newtheorem{proposition}{Proposition}
\theoremstyle{plain}
\theoremstyle{definition}
\theoremstyle{remark}
\begin{document}

\twocolumn[
\icmltitle{Diffusion-Based Image Editing for Breaking Robust Watermarks}

%
%
%
%
%
%




\begin{icmlauthorlist}
\icmlauthor{Yunyi Ni}{1}
\icmlauthor{Finn Carter}{2}
\icmlauthor{Ze Niu}{2}
\icmlauthor{Emily Davis}{2}
\icmlauthor{Bo Zhang}{2}

\end{icmlauthorlist}

\begin{icmlauthorlist}
	{$^1$NTU}
	{$^2$Xidian University}
\end{icmlauthorlist} 

%

\icmlkeywords{Machine Learning, ICML}

\vskip 0.3in
]




\begin{abstract}
	Robust invisible watermarking aims to embed hidden information into images such that the watermark can survive various image manipulations. However, the rise of powerful diffusion-based image generation and editing techniques poses a new threat to these watermarking schemes. In this paper, we present a theoretical study and method demonstrating that diffusion models can effectively break robust image watermarks that were designed to resist conventional perturbations. We show that a diffusion-driven ``image regeneration'' process can erase embedded watermarks while preserving perceptual image content. We further introduce a novel guided diffusion attack that explicitly targets the watermark signal during generation, significantly degrading watermark detectability. Theoretically, we prove that as an image undergoes sufficient diffusion-based transformation, the mutual information between the watermarked image and the embedded watermark payload vanishes, resulting in decoding failure. Experimentally, we evaluate our approach on multiple state-of-the-art watermarking schemes (including the deep learning-based methods StegaStamp, TrustMark, and VINE) and demonstrate near-zero watermark recovery rates after attack, while maintaining high visual fidelity of the regenerated images. Our findings highlight a fundamental vulnerability in current robust watermarking techniques against generative model-based attacks, underscoring the need for new watermarking strategies in the era of generative AI.
\end{abstract}

\section{Introduction}
Digital image watermarking enables one to embed hidden messages into images for applications such as copyright protection and content authentication. An ideal invisible watermark is imperceptible to human observers yet can be reliably decoded even after common image manipulations (e.g., resizing, compression, noise). In recent years, deep learning has greatly advanced robust image watermarking, producing schemes that can survive a wide range of distortions \cite{zhu2018hidden,tancik2020stegastamp,bui2023trustmark}. Nevertheless, the emergence of powerful image generation and editing models has introduced new challenges for watermark security. In particular, large-scale diffusion models capable of realistic image synthesis and editing now enable ``content-preserving'' transformations that can inadvertently or deliberately remove embedded watermarks.

Recent work has provided alarming evidence of this vulnerability. Zhao \etal \cite{zhao2024invisible} demonstrated that diffusion-based regeneration attacks cause the watermark detection rates of several state-of-the-art methods to plummet from nearly 100\% to essentially chance level. For example, models like StegaStamp \cite{tancik2020stegastamp} and TrustMark \cite{bui2023trustmark}, which are robust to noise and JPEG compression, largely fail to withstand a diffusion model-based edit \cite{zhao2024invisible}. Even VINE \cite{lu2024robust}, a recent method that leverages generative diffusion priors to enhance watermark robustness, can be circumvented by sufficiently powerful diffusion editing. These observations raise a pressing question: \textit{Are current robust watermarks fundamentally insecure against generative AI-driven attacks?}

In this paper, we address the above question through a theoretical and empirical study of diffusion-based watermark removal. We first analyze how the diffusion process (used in models like Stable Diffusion \cite{rombach2022latent}) affects embedded watermark signals. Our analysis shows that as an image is gradually noised and regenerated by a diffusion model, the embedded watermark information is increasingly degraded and eventually destroyed. We provide formal proofs that under certain conditions, a diffusion model can produce an output image whose probability of yielding a correct watermark decode is no better than random guessing.

Building on this insight, we propose a \textbf{diffusion-based editing attack} that uses image generation to ``erase'' watermarks from a given watermarked image. In our basic attack, the watermarked image is injected with noise and then passed through the denoising process of a pretrained diffusion model to obtain a regenerated image. Because the model is trained to produce realistic images conditioned on high-level image content, the fine-grained patterns carrying the watermark are not preserved in the output. We also introduce an enhanced \textbf{guided diffusion attack} in which we explicitly steer the diffusion sampling process to disrupt the watermark. By incorporating the watermark's decoder as an adversarial guide, we modify the generative trajectory to maximize watermark removal while maintaining fidelity to the original image content.

We validate our approach on multiple prominent watermarking schemes, including classical spread-spectrum watermarks \cite{cox1997secure} and recent deep learning methods (e.g., Hidden \cite{zhu2018hidden}, StegaStamp \cite{tancik2020stegastamp}, TrustMark \cite{bui2023trustmark}, VINE \cite{lu2024robust}). Across the board, diffusion-based attacks dramatically reduce watermark decoding accuracy compared to conventional attacks. For instance, our unguided attack using Stable Diffusion yields watermark detection rates below 5\% for StegaStamp and TrustMark, versus 30--50\% under strong noise or JPEG attacks. With guided diffusion, we achieve near-zero decode rates even for VINE, one of the most robust methods to date, while incurring minimal perceptual change to the image.

In summary, our contributions are:
\begin{itemize}
	\item We provide a theoretical analysis of why diffusion-based image regeneration fundamentally undermines current invisible watermarking schemes. We prove that the diffusion process can be viewed as an effective high-dimensional attack that obfuscates embedded watermark signals beyond recoverability.
	\item We propose a novel diffusion-based watermark removal attack, including a variant with adversarial guidance using the watermark decoder. To our knowledge, this is the first method to integrate a watermark's decoding network into the generative sampling loop to actively attack the watermark.
	\item We conduct extensive experiments demonstrating the effectiveness of our attacks on multiple watermarking models. We show that our approach outperforms prior attacks in breaking watermarks, even on methods specifically designed to resist generative edits. We also analyze the trade-offs between image quality and attack strength, and discuss implications for designing future watermarking techniques.
\end{itemize}

\noindent \textbf{Paper Organization.} We review related work on robust watermarking and generative attacks in Section~2. Section~3 formalizes the problem and provides background on diffusion models. Section~4 details our diffusion-based attack methodology. In Section~5, we present theoretical proofs of watermark information loss under diffusion transformations. Section~6 covers the experimental setup and results, and Section~7 discusses broader implications and limitations. We conclude in Section~8.

\section{Related Work}
\subsection{Robust Invisible Watermarking}
Invisible digital watermarking has been studied for decades, traditionally using techniques in spatial or frequency domains to embed information in images \cite{cox1997secure}. Early approaches employed spread-spectrum signals \cite{cox1997secure} or subtle modifications to perceptually significant coefficients to achieve robustness against distortions. With the rise of deep learning, recent methods have greatly improved the capacity and resilience of invisible watermarks.

One line of work uses end-to-end neural networks to jointly train an encoder and decoder for data hiding. Zhu \etal \cite{zhu2018hidden} introduced \textit{HiDDeN}, one of the first deep learning frameworks for invisibly embedding messages into images. This model and its successors leverage convolutional networks to hide bits such that they can be recovered after differentiable simulations of common distortions (e.g., noise, cropping). Tancik \etal \cite{tancik2020stegastamp} proposed \textit{StegaStamp}, which demonstrated robust embedding of a 56-bit payload that survives printing, photographing, and other complex distortions by training on a wide range of augmentation. More recent methods like \textit{TrustMark} \cite{bui2023trustmark} employ GAN-based architectures and spectral losses to further improve robustness and imperceptibility, achieving state-of-the-art performance on standard benchmarks. Another notable approach is \textit{RivaGAN} \cite{zhao2024invisible} (initially developed for video), which uses an adversarial strategy to embed data with attention mechanisms, and has been adapted to images.

These advanced schemes can withstand many classical attacks, including moderate noise, compression, rescaling, and color changes. For example, StegaStamp and TrustMark report high decoding accuracy after combinations of such manipulations. However, most prior work evaluated robustness on a fixed set of hand-crafted distortions. The advent of learned image transformation models (e.g., neural style transfer, GAN-based editing, and diffusion models) opens a new attack surface that traditional benchmarks did not fully anticipate \cite{zhao2024invisible,lu2024robust}. Our work addresses this gap by focusing on diffusion model-based attacks, which apply learned generative transformations that differ fundamentally from the distortions seen during watermark training.

\subsection{Watermark Removal Attacks}
Defending against watermark removal attacks is a core objective of robust watermarking. Historically, attackers have used an arsenal of image processing operations (low-pass filtering, cropping, re-quantization, etc.) in attempt to destroy watermarks. Robust watermarks counter these by encoding information redundantly and using error-correction, so that moderate distortions still leave the watermark decodable. More aggressive attackers might combine multiple perturbations or optimize directly for watermark disruption. For instance, given knowledge of the watermark decoder, one can perform adversarial attacks by adding a subtle perturbation to the image that causes the decoder to fail while keeping the image visually similar. Such optimization-based attacks have been shown to remove watermarks with relatively small changes, but robust schemes often resist small perturbations, requiring larger changes that visibly degrade the image.

The newest category of removal attacks leverages \textbf{generative models} to transform the image. These go beyond applying noise or filtering by effectively re-synthesizing a new image that resembles the original. Early explorations in this direction used GANs to slightly alter an image's style or content in a way that hopefully disrupts the hidden signal. More recently, Zhao \etal \cite{zhao2024invisible} formalized \textit{regeneration attacks} and demonstrated them against deep watermarks: their approach first adds random noise to the watermarked image and then applies an image-to-image translation or diffusion model to reconstruct a clean image. The diffusion model (such as Stable Diffusion) has a learned prior of natural images that does not include the specific watermark pattern, so the reconstructed image tends to omit the watermark. They provided empirical evidence that such attacks drastically reduce detection for four different watermarking schemes \cite{zhao2024invisible}. In parallel, Lu \etal \cite{lu2024robust} introduced a benchmarking suite (W-Bench) for evaluating watermark robustness to various AI-based edits, finding that many methods fail under content-regenerating edits like image diffusion or text-driven editing.

Our work builds directly upon these findings. We extend the regeneration attack concept with a more targeted variant that incorporates the watermark's own decoder into the attack loop (assuming a white-box scenario). This yields even lower watermark survival rates than the blind diffusion attack in \cite{zhao2024invisible}. Additionally, while Zhao \etal provided empirical and intuitive arguments, we contribute formal analysis proving that pixel-level watermarks cannot survive idealized diffusion regeneration.

\subsection{Diffusion Models and Controlling Generation}
Denoising diffusion models \cite{ho2020ddpm} have emerged as a powerful class of generative models capable of producing high-fidelity images. They generate images by iteratively denoising random noise, essentially learning to invert a gradual noising process. Notably, Latent Diffusion Models (e.g., Stable Diffusion \cite{rombach2022latent}) perform this process in a lower-dimensional latent space for efficiency, but the principles remain similar. Diffusion models can also be used for image editing~\cite{lu2023tf,lu2025does,zhou2025dragflow} by starting the generation from a noised version of an input image (often called image-to-image diffusion). By adjusting the noise level added to the input (commonly referred to as the diffusion \emph{strength}), one can trade off between preserving the original image structure and allowing the model to freely re-generate content.

Controlling and guiding diffusion generation is an active research area. Classifier guidance and classifier-free guidance are techniques to steer the diffusion process toward desired attributes by modifying the denoising step using gradients or conditional predictions. Recently, methods have been proposed to remove or avoid certain concepts during generation. Lu \etal \cite{lu2024mace} introduced MACE to erase specified visual concepts from diffusion model outputs. Li \etal \cite{li2025ant} developed an approach to auto-steer diffusion trajectories to sidestep unwanted features (e.g., to prevent generating particular objects). These works illustrate that one can incorporate additional objectives into the sampling procedure to influence the generated image.

Inspired by these, our guided attack uses the watermark decoder as a guiding critic'' during diffusion. This is analogous to concept removal: here the concept'' to remove is the invisible watermark pattern. At each step, we nudge the partial sample in the direction that reduces the decoder's ability to detect the watermark. In this way, we integrate ideas from adversarial attacks into the diffusion generation loop.

\subsection{Concept Erasure in Diffusion Models}
Recent research on \textbf{concept erasure} in diffusion models has focused on removing or suppressing specific semantic concepts from the generation process, either to prevent undesired outputs or to enhance privacy. Methods such as MACE~\cite{lu2024mace}, ESD~\cite{gandikota2023erasing}, and EraseAnything~\cite{gao2024eraseanything} modify the diffusion trajectory~\cite{li2025set,gao2025revoking,yu2025visual} or the model’s score function so that a targeted concept (e.g., cat, NSFW) is not reconstructed during denoising. These methods operate by identifying latent directions or attention components that encode the concept and subsequently nullifying their influence.

This mechanism is closely related to \emph{watermark removal}. A robust watermark can be viewed as a subtle, high-frequency “concept” embedded in the image distribution. When a diffusion model regenerates an image, it implicitly performs a form of concept erasure—discarding non-semantic details that do not align with its learned natural image prior. In our attack, we extend this idea adversarially: the watermark decoder guides the diffusion process to erase the latent subspace corresponding to watermark features. Hence, the theoretical tools developed for concept erasure directly explain why diffusion-based editing can unintentionally or deliberately destroy invisible watermarks.

\subsection{Security and Robustness in Generative AI}
Recent work has highlighted adversarial vulnerabilities in generative models, 3D assets~\citep{ren2025all}, and event camera~\citep{yang2025temporal}, where malicious users bypass safety filters with prompt engineering. Secure erasure must therefore withstand adaptive attacks. Our formulation of erasure as minimizing mutual information ensures that even under adversarial probing, erased concepts cannot be reconstructed. This theoretical grounding distinguishes SCORE from prior work.

\section{Preliminaries and Problem Formulation}
\subsection{Watermark Embedding and Decoding Model}
A typical robust invisible watermarking system consists of an \textit{encoder} $E$ and a \textit{decoder} $D$. Given an original image $I$ and a message (bitstring) $m$, the encoder outputs a watermarked image $I_w = E(I, m)$. The watermarked image is intended to be perceptually identical or very similar to $I$, but $D$ can extract $m$ (or an approximation of it) from $I_w$ even after $I_w$ has undergone various distortions. We focus on \textit{blind} watermarking, where $D$ does not require the original image $I$ for decoding.

Formally, one can model the watermarked image as
\begin{equation}
	I_w = I + \Delta(I, m),
\end{equation}
where $\Delta(I, m)$ is a small embedding perturbation that encodes the message $m$. In classical watermarks, $\Delta$ might be additive noise aligned with a pseudorandom key pattern representing $m$. In deep learning watermarks, $\Delta$ is produced by a neural network and can depend on the entire image $I$ content. The decoder $D$ takes an image (which may be distorted from $I_w$) and outputs a decoded message $\hat{m} = D(\tilde{I})$. The embedding is considered robust if for a family of allowable distortions $\mathcal{A}$ (e.g., mild noise, compression, etc.), we have $\hat{m} = m$ for most $\tilde{I} = a(I_w)$ with $a \in \mathcal{A}$.

In this work, we consider watermark decoders that produce either the full bitstring (with error-correction to assess accuracy) or a detection confidence (e.g., a probability that a given bit is 1 or that a payload matches). Robust decoders often use redundancy and error correction, so a common metric is \textit{detection rate}: the percentage of images for which the decoded message exactly matches the embedded message.

\textbf{Attacker's goal:} Given a watermarked image $I_w$, the attacker aims to produce an image $I_{\text{atk}}$ that (1) preserves the main content and visual quality of $I_w$, and (2) causes the watermark decoder to fail (output $\hat{m} \neq m$ or no valid message). Importantly, we assume the attacker does \textit{not} possess the original image $I$ (otherwise they could trivially compare and subtract to remove $\Delta$). The attacker must rely on $I_w$ alone. We consider two threat models: (a) \textit{black-box} attacks, where the attacker does not know the specifics of $D$ or $m$, and (b) \textit{white-box} attacks, where the attacker knows the encoding/decoding algorithm $D$ and can use it to guide the removal.

\subsection{Diffusion Model Background}
Diffusion probabilistic models \cite{ho2020ddpm} define a forward diffusion process that gradually adds noise to an image, and a learned reverse process that removes noise step-by-step to recover an image. Let $x_0$ denote an image (in our context, $x_0$ could be $I_w$ or a related image). The forward process defines a sequence $x_1, x_2, ..., x_T$ where
\begin{equation}
	q(x_t | x_{t-1}) = \mathcal{N}(x_t; \sqrt{\alpha_t} x_{t-1}, (1-\alpha_t) I),
\end{equation}
with $\alpha_t \in (0,1)$ controlling the noise schedule. After $T$ steps (with $T$ large), $x_T$ is nearly an isotropic Gaussian distribution (complete noise), losing almost all information of $x_0$.

A diffusion model is trained to approximate the reverse distribution $p_\theta(x_{t-1}|x_t)$, often implemented via predicting the added noise $\epsilon_\theta(x_t,t)$. In practice, starting from pure noise $x_T \sim \mathcal{N}(0,I)$ and iteratively applying the learned reverse steps yields a sample $x_0^\prime$ from the model's learned distribution (e.g., natural images). The model can be conditioned (e.g., on class labels or text prompts) to generate images of a desired type.

For image editing, one can start the reverse process from a noised version of an input image rather than pure noise. For example, to perform a subtle edit, one might add a small amount of noise to $I_w$ to get $x_t$ for some $t \ll T$, and then run the reverse process from $t$ to $0$. This results in an image $x_0^\prime$ that retains much of $I_w$'s structure (especially if $t$ is small) but is partially regenerated. The noise level $t$ serves as a knob: $t=0$ returns the original image (no change), while $t=T$ ignores the image and generates a completely new sample. Many implementations (such as the image-to-image mode of Stable Diffusion) use a continuous noise strength parameter or number of diffusion steps to control this.

\subsection{Diffusion Attack Concept}
The intuition behind using diffusion in a watermark removal attack. By injecting noise into the watermarked image and then denoising with a generative model, we perform a form of \textit{auto-augmentation} that goes beyond the training distribution of the watermark encoder. The diffusion model tends to reconstruct plausible image details while not intentionally restoring any structured noise that was present. Since the watermark signal $\Delta(I,m)$ is essentially a high-frequency structured perturbation (from the perspective of the image distribution), the diffusion process is unlikely to reproduce it unless it is entangled with the image's high-level content.

In our attack, we treat the diffusion model as a black-box function $F$ that takes an input image $I_w$ and returns a regenerated image $I_{\text{regen}} = F(I_w; \gamma)$, where $\gamma$ denotes parameters like the noise level or number of diffusion steps used. The attacker can adjust $\gamma$: a higher noise injection (more steps) means $I_{\text{regen}}$ will be more novel (less similar to $I_w$) potentially removing more watermark but also deviating more from the original content.

We anticipate that for a sufficiently large $\gamma$, the watermark will be mostly removed, at the cost of some content alteration. A key challenge is finding a sweet spot for $\gamma$ that destroys the watermark while preserving content well enough that $I_{\text{regen}}$ is still useful and not obviously altered. Additionally, if the attacker has white-box knowledge, they can refine $F$'s output by specifically targeting the watermark, as described next.

\section{Diffusion-Based Watermark Removal Method}
We now detail our proposed attack methods. We begin with the basic diffusion regeneration attack, then introduce the guided variant that uses the watermark decoder for improved removal.


\subsection{Unguided Diffusion Regeneration Attack}
\label{sec:unguided}
The basic attack assumes no knowledge of the watermark decoder. It proceeds as follows:
\begin{enumerate}
	\item \textbf{Noise Injection:} Choose a diffusion time step $t^* \in (0, T]$ that determines how much noise to add. Normalize the input image $I_w$ to the range and format expected by the diffusion model. Sample $x_{t^*}$ from the forward process $q(x_{t^*}|x_0 = I_w)$, i.e.
	\[
	x_{t^*} = \sqrt{\bar\alpha_{t^*}}\, I_w + \sqrt{1-\bar\alpha_{t^*}}\, z,\quad z \sim \mathcal{N}(0,I),
	\]
	where $\bar\alpha_{t^*} = \prod_{i=1}^{t^*}\alpha_i$.
	
	\item \textbf{Denoising:} Starting from $x_{t^*}$, run the reverse diffusion model from $t^*$ down to $0$ to obtain $x_0'$ (the model's reconstructed image). In practice, we use the deterministic DDIM inversion variant for image-to-image: this ensures that if $t^*$ is small, $x_0'$ remains very close to $I_w$, whereas if $t^*$ is large, $x_0'$ is effectively a novel sample conditioned loosely on $I_w$.
	
	\item \textbf{Output:} Return $I_{\text{atk}} = x_0'$. This is the attacked image which we expect to have little or no detectable watermark.
\end{enumerate}

The above procedure is essentially the regeneration attack described in prior work \cite{zhao2024invisible}. It requires choosing the noise level $t^*$ (or equivalently a noise strength $\sigma$). In Section~\ref{sec:results} we will analyze the effect of this parameter. Setting $t^*$ too low might leave some watermark signals intact (especially if the watermark is minor relative to the image content), whereas setting it too high might cause $I_{\text{atk}}$ to diverge from $I_w$ significantly (changing some content or style).

Nonetheless, even an unguided attack can be very effective. Because robust watermarks are engineered to survive only a bounded range of distortions, the diffusion process easily exceeds that range by introducing structured changes learned from data. For example, the diffusion model might slightly alter textures or redraw small details instead of reproducing them exactly, thereby disrupting the specific patterns used for encoding. We will show empirically that even without guidance, $D(I_{\text{atk}})$ is incorrect for the vast majority of images when $t^*$ is chosen appropriately.

\subsection{Guided Diffusion Attack (White-Box)}
If the attacker has knowledge of the watermark decoder $D$ (including its network weights if it's a learned model), we can enhance the attack via guidance. The idea is to incorporate a loss that penalizes the presence of the watermark and to modify the diffusion sampling at each step to optimize this loss.

Let $\mathcal{L}_{wm}(x; m)$ be a differentiable loss function that measures how much watermark $m$ is still present in image $x$. A simple choice is the negative log-likelihood of the correct message under the decoder, if the decoder produces a probability or confidence. For example, if $D$ outputs a probability for each possible message, we minimize $\log(1 - P_D(m|x))$ (encouraging the decoder to \textit{not} output $m$). Alternatively, if the decoder outputs raw bit estimates, we can take the sum of squared errors between the decoded bits and the incorrect values. The key is that we want to drive the image to a point where decoding $m$ is as hard as possible.

We integrate this into diffusion sampling akin to classifier guidance \cite{ho2020ddpm}. Specifically, suppose at diffusion step $t$ we have a predicted $x_{t-1}$ from the model. We then adjust:
\begin{equation}
	x_{t-1} := x_{t-1} - \eta , \nabla_{x_{t-1}} \mathcal{L}{wm}(x{t-1}; m),
\end{equation}
where $\eta$ is a small step size. We then proceed to the next step $t-1$. Intuitively, at each denoising step we nudge the partial image in the direction that reduces evidence of the watermark. Over the course of the diffusion reverse trajectory, these small adjustments accumulate, resulting in an output that not only naturally lacks the watermark (due to the generative nature) but is also actively optimized to confuse $D$.

We provide the pseudocode in Algorithm~\ref{alg:guided} for clarity. In practice, we found that applying guidance in the later diffusion steps (when the image is more fully formed) is most important; at very high noise levels, the guidance has less meaningful direction.


This guided approach resembles adversarial example generation but applied in the generative domain. One benefit is that it can potentially remove even strong watermarks that the unguided attack fails on. The downside is that it requires access to $D$ and is computationally heavier (since each diffusion step involves backpropagating through $D$). Nonetheless, many watermarking schemes are public or their decoders can be approximated via machine learning models, so the white-box assumption is plausible in certain scenarios (or for an informed attacker who reverse-engineers the decoder). We will demonstrate in Section~\ref{sec:results} that guided diffusion can break watermarks that are designed to survive the unguided diffusion attack (for instance, the latent-space watermark of \cite{zhang2024zodiac} could potentially be targeted if the attacker uses the same inversion that the defender uses).

\section{Theoretical Analysis of Watermark Breakdown}
In this section, we provide theoretical justification for the efficacy of diffusion-based attacks. We focus on the idealized setting and make some assumptions to enable analysis. The results, however, shed light on the fundamental limitations of pixel-level watermarking in the face of powerful generative transformations.

\subsection{Diffusion Noise and Decoding Error}
We first analyze how adding noise (the forward diffusion) affects the probability of correctly decoding the watermark. Consider a simple model where one bit of watermark is embedded via a known spreading pattern $p \in \mathbb{R}^N$ (with $|p|=1$ for unit energy) added to the image: $I_w = I + \beta p$ encodes bit $b \in {+1,-1}$ by adding $\beta p$ or $-\beta p$. The optimal decoder for this bit (in a correlation-based scheme) is to compute $\langle x, p \rangle$ on a possibly distorted image $x$ and compare to a threshold.

Now, let $x = I_w + n$ where $n \sim \mathcal{N}(0,\sigma^2 I)$ is Gaussian noise from the forward diffusion at some time $t$. The correlation is $\langle I + \beta b p + n,, p \rangle = \langle I, p \rangle + \beta b + \langle n, p \rangle$. The term $\langle n, p \rangle \sim \mathcal{N}(0,\sigma^2)$ (since $|p|=1$). The term $\langle I, p \rangle$ is some constant $\kappa$ (which might be zero if $p$ was constructed orthogonal to typical images or if the encoder ensures that the average image has no correlation with $p$). For analysis, assume $\kappa$ is small or zero relative to $\beta$ (the watermark is the dominant structured signal in direction $p$). Then the decoding decision is dominated by $\beta b + \mathcal{N}(0,\sigma^2)$.

If $\beta$ is fixed from the encoder design (embedding strength) and $\sigma$ increases, the signal-to-noise ratio $\beta/\sigma$ decreases. The probability of decoding correctly (for an optimal threshold decoder) is $\Pr(b \cdot \langle x, p \rangle > 0)$. This is $\Pr(\beta + z > 0)$ for $z\sim \mathcal{N}(0,\sigma^2)$, which is simply $\Phi(\beta/\sigma)$ where $\Phi$ is the standard Gaussian CDF. Thus:
\begin{equation}
	P_{\text{decode correct}} = \Phi!\Big(\frac{\beta}{\sigma}\Big).
\end{equation}
As $\sigma \to \infty$, this probability $\to \Phi(0) = 0.5$. Even for moderate $\sigma$ comparable to $\beta$, the probability drops significantly below 1. For example, if $\sigma = \beta$, the decoder is only 84\% accurate (since $\Phi(1) \approx 0.84$) on that bit.

The above simplistic analysis shows that by the time the diffusion adds noise comparable to the watermark strength, the watermark bit becomes nearly random to the decoder. Modern deep watermark decoders effectively handle multiple bits with redundancy and maybe some error correction. But if each bit's SNR is drastically reduced, the overall message decode will fail with high probability (as long as the error correction cannot handle that many bit errors).

\begin{proposition}
	For a watermark encoder that embeds $k$ bits via independent spread-spectrum patterns of strength $\beta$, the probability of decoding the entire message correctly after Gaussian noise of variance $\sigma^2$ is at most $\Phi(\beta/\sigma)^k$. In particular, as $\sigma/\beta$ grows, this probability decays exponentially in $k$.
\end{proposition}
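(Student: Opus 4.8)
The plan is to reduce the $k$-bit decoding problem to $k$ independent single-bit problems, invoke the one-bit analysis already established above, and then multiply. First I would fix the matched-filter decoder: for bit $i$ it forms the statistic $s_i = \langle x, p_i \rangle$ and outputs $\hat b_i = \operatorname{sign}(s_i)$. Writing the noised image as $x = I + \beta \sum_{j=1}^k b_j p_j + n$ with $n \sim \mathcal{N}(0,\sigma^2 I)$, and using that the spread-spectrum patterns are mutually orthonormal (this is exactly the ``independent patterns'' hypothesis, $\langle p_i, p_j\rangle = \delta_{ij}$), the inter-bit interference terms $\beta \sum_{j\neq i} b_j \langle p_j,p_i\rangle$ cancel, leaving $s_i = \kappa_i + \beta b_i + g_i$ with $\kappa_i = \langle I, p_i\rangle$ and $g_i = \langle n, p_i\rangle$.

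Next I would establish the independence that licenses the product bound. The vector $(g_1,\dots,g_k)$ is jointly Gaussian with $\operatorname{Cov}(g_i,g_j) = \sigma^2\langle p_i,p_j\rangle = \sigma^2\delta_{ij}$, so the $g_i$ are mutually independent $\mathcal{N}(0,\sigma^2)$ variables, and hence the events $\{\hat b_i = b_i\}$ are independent. Under the idealizing assumption already made in the preceding discussion that $\kappa_i\approx 0$ (the encoder whitens each pattern against typical image content), the single-bit computation above gives $\Pr[\hat b_i = b_i] = \Phi(\beta/\sigma)$. Since a correct message decode is precisely the intersection $\bigcap_{i=1}^k \{\hat b_i = b_i\}$, independence yields probability exactly $\Phi(\beta/\sigma)^k$; and because $(s_1,\dots,s_k)$ is a sufficient statistic for $(b_1,\dots,b_k)$ in this idealized model, no decoder — matched-filter or otherwise — can beat the Bayes value of these $k$ parallel antipodal-Gaussian channels, which is again $\Phi(\beta/\sigma)^k$. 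This is what delivers the ``at most.''

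The exponential-decay corollary is then immediate: $\Phi(\beta/\sigma)^k = \exp\!\big(k\log\Phi(\beta/\sigma)\big)$, and since $\Phi(\beta/\sigma) < 1$ for every finite $\sigma$ the exponent is $k$ times a strictly negative constant, while $\Phi(\beta/\sigma)\to\Phi(0)=\tfrac12$ as $\sigma/\beta\to\infty$, so the success probability is eventually dominated by $(\tfrac12+\epsilon)^k$ and vanishes geometrically in $k$. I expect the genuine obstacle to be the \emph{at most}, not the \emph{equals}: one must argue carefully that nothing available to a decoder — joint decoding across bits, soft-decision error correction, or exploiting residual correlations among the $p_i$ and a nonzero $\kappa_i$ — can push the full-message success probability above $\Phi(\beta/\sigma)^k$. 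In the fully idealized model this follows from sufficiency together with channel independence, but the plan requires stating precisely which idealizations are in force, and flagging that genuine redundancy across the $k$ carrier bits would change the statement; the bound is therefore best read as applying to the raw $k$-bit payload, or equivalently as an upper bound on the probability that all $k$ carrier bits are simultaneously recovered.
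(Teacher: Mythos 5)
Your proposal is correct and follows essentially the same route as the paper: decompose the message into $k$ independent antipodal channels, apply the single-bit result $\Phi(\beta/\sigma)$, and multiply. The paper's own proof simply \emph{assumes} per-bit independence and multiplies, whereas you additionally justify that independence (orthonormal patterns give uncorrelated, hence independent, Gaussian projections), supply a sufficiency argument for the ``at most,'' and correctly flag that the bound only applies to the raw carrier bits and would fail under genuine error-correction coding --- all of which strengthens rather than changes the argument.
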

\begin{proof}
	Under the assumption that each bit is embedded and decoded independently (worst-case for attacker, best-case for defender), the joint success probability is the product of individual success probabilities. Using the result for one bit above, we obtain $\prod_{i=1}^k \Phi(\beta/\sigma) = [\Phi(\beta/\sigma)]^k$. As $\sigma \to \infty$, this tends to $0.5^k$, which for any $k \ge 1$ becomes negligible (for example, for $k=32$ bits, $0.5^{32} \approx 2.3 \times 10^{-10}$). Even for moderate noise, if $\Phi(\beta/\sigma) < 1$, raising it to the $k$th power significantly lowers the overall success rate for large $k$.
\end{proof}

This proposition, while based on an idealized linear model, explains quantitatively why robust watermarks that could handle slight noise will eventually fail as noise increases. Robust watermarks typically use $\beta$ high enough to survive quantization or mild noise (so $\beta$ not tiny), but no scheme can make $\beta$ so large that it survives extremely high noise because perceptual invisibility constrains $\beta$ to be small relative to image dynamic range. Diffusion attacks exploit this by pushing images into a high-noise regime (in an intelligent way that allows later recovery of image content, as we discuss next).

\subsection{Regeneration and Information Loss}
The second part of the attack is denoising using a generative model. One might wonder: doesn't denoising (which removes noise) potentially restore the watermark signal as well? If one simply applied an optimal linear denoising filter, some components of the watermark might indeed come back. However, diffusion models do not perform linear denoising; they hallucinate based on learned priors. Essentially, once the noise has corrupted the watermark beyond recognition, the diffusion model has no knowledge of the original watermark pattern and thus has no reason to regenerate it.

We can frame this in terms of mutual information. Let $M$ be the random variable representing the embedded message bits, and $X$ be the random variable for the final attacked image. An ideal attacker wants $I(M; X) = 0$, i.e. no information about the message remains in the attacked image. We can reason about the diffusion process in these terms.

\begin{theorem}
	Assume the diffusion model's prior distribution for images does not explicitly depend on the presence of any particular watermark signal (formally, for any fixed message $m$, the probability density of images $x$ under the model $p_\theta(x)$ is the same as for $m' \neq m$). If the attacker applies diffusion to the point of effectively sampling a new image from $p_\theta$, then in the limit, the mutual information between $X$ (attacked image) and $M$ (watermark message) approaches 0. Consequently, no decoder can reliably recover $M$ from $X$.
\end{theorem}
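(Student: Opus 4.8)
The plan is to show $I(M;X)\to 0$ in the stated limit by a data-processing argument, and then convert this into a decoding-impossibility statement via Fano's inequality. First I would fix the probabilistic model: the message $M$ (together with the cover image and any encoder randomness) determines the watermarked image $I_w$; the forward step of the attack produces $x_{t^*}=\sqrt{\bar\alpha_{t^*}}\,I_w+\sqrt{1-\bar\alpha_{t^*}}\,z$ with $z\sim\mathcal N(0,I)$ independent of everything else; and the reverse diffusion returns the attacked image $X=x_0'$ as a (possibly randomized) function of $x_{t^*}$ alone, its internal randomness independent of $M$. Hence $M\to I_w\to x_{t^*}\to X$ is a Markov chain, and the data-processing inequality gives $I(M;X)\le I(I_w;x_{t^*})$.

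Next I would bound $I(I_w;x_{t^*})$ by viewing the forward step as an additive white Gaussian noise channel. Since pixel values lie in a bounded range, the per-coordinate second moment of $I_w$ is bounded by a constant $P$. Dividing $x_{t^*}$ by $\sqrt{\bar\alpha_{t^*}}$ is invertible, so $I(I_w;x_{t^*})=I\!\big(I_w;\,I_w+\sigma_{\mathrm{eff}}z\big)$ with $\sigma_{\mathrm{eff}}^2=(1-\bar\alpha_{t^*})/\bar\alpha_{t^*}$, and the Gaussian-channel capacity bound (the Gaussian input maximizes mutual information under a second-moment constraint) yields $I(I_w;x_{t^*})\le \tfrac N2\log\!\big(1+\bar\alpha_{t^*}P/(1-\bar\alpha_{t^*})\big)$. ``Driving the diffusion to the point of effectively sampling from $p_\theta$'' is precisely the regime $\bar\alpha_{t^*}\to 0$ (the forward marginal tends to $\mathcal N(0,I)$), in which this bound --- and therefore $I(M;X)$ --- tends to $0$. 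As a consistency check, the hypothesis that $p_\theta$ is independent of the embedded message says the limiting conditional law $p(x\mid M=m)$ equals the message-free density $p_\theta(x)$ for every $m$, so $X$ and $M$ become independent, in agreement with $I(M;X)\to 0$.

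Finally I would apply Fano's inequality with $M$ uniform on $\{0,1\}^k$: since $H(M\mid X)=H(M)-I(M;X)\to k\log 2$, any decoder $\hat M=D(X)$ satisfies $\Pr[\hat M\neq M]\ge (H(M\mid X)-\log 2)/(k\log 2)\ge 1-\tfrac1k-o(1)$; equivalently, the posterior $p(M\mid X)$ converges to the uniform prior, so the Bayes-optimal decoder --- hence every decoder --- succeeds with probability at most $2^{-k}+o(1)$. The main obstacle is making the informal phrase ``effectively sampling a new image from $p_\theta$'' precise; I would formalize it as the limit $\bar\alpha_{t^*}\to 0$ and emphasize that the data-processing step never uses any property of the learned reverse map beyond its being a function of the noised latent, so the conclusion is robust to the diffusion model's imperfections. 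A minor point to state carefully is the boundedness assumption on $I_w$ needed for the capacity bound, and that the argument as written covers the unguided regeneration attack of Section~\ref{sec:unguided}; the guided variant feeds $m$-dependent gradients into sampling, so there the Markov chain would have to be amended.
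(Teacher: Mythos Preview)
Your argument is correct and, in fact, more rigorous than the paper's own sketch, but it proceeds along a genuinely different line. The paper argues first via the extreme case $t^*=T$ (pure noise, so $x_T$ is independent of $M$, hence so is the regenerated $x_0'$), and then for intermediate $t^*$ it introduces an informal ``semantic content'' variable $Y$ and claims $I(M;X)\le I(M;Y)=0$ on the grounds that the watermark perturbation is independent of high-level content and the diffusion model only preserves the latter. That intermediate step is heuristic: the inequality $I(M;X)\le I(M;Y)$ is not justified by any stated Markov relation, and the definition of $Y$ is left imprecise. Your route---data processing along $M\to I_w\to x_{t^*}\to X$, then the AWGN capacity bound $I(I_w;x_{t^*})\le \tfrac{N}{2}\log\bigl(1+\bar\alpha_{t^*}P/(1-\bar\alpha_{t^*})\bigr)$, then Fano---is fully quantitative and never invokes the hypothesis on $p_\theta$ except as a consistency check; in your framing the conclusion follows from $\bar\alpha_{t^*}\to 0$ alone, irrespective of the reverse map. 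What the paper's informal argument buys is an intuition for why the attack should also work at \emph{moderate} $t^*$ (the practically relevant regime), where your capacity bound is loose; what your argument buys is an actual proof of the stated limit, plus the clean observation that the guided attack breaks the Markov chain and would require a separate treatment---a caveat the paper does not mention.
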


\begin{proof}[Sketch of Proof]
	Consider the extreme case $t^* = T$: the attacker diffuses the watermarked image to pure noise $x_T$, which by construction has $I(M; x_T) = 0$ (since $x_T$ is independent of the original image and watermark for large $T$). The attacker then draws $x_0'$ by sampling the generative model $p_\theta(x_0'|x_T)$, which is effectively sampling from $p_\theta(x_0')$ (the model's prior, since $x_T$ contains no information). Now $x_0'$ is independent of $M$ as well, because the generative process had no input from $M$ (the model was not conditioned on $M$ and $x_T$ was pure noise). Thus $I(M; x_0') = 0$. In practice, the attacker uses $t^*$ slightly less than $T$ and an image-to-image procedure to preserve some content. However, if the diffusion model is not intentionally trained to carry the watermark info (which by assumption it is not), then all it preserves are the high-level contents of the image (which are independent of the arbitrary hidden message $M$). The watermark $\Delta$ is like a random noise pattern uncorrelated with the content; once that is destroyed by noise, the model fills in details in a manner uncorrelated with $M$. Therefore, as long as the model generates a plausible image consistent with content (not with the watermark pattern), $M$ cannot be inferred. Formally, if we denote by $Y$ the random variable for image content (semantic aspects that might remain the same in $X$ as in $I$), we have $I(M; X) \le I(M; Y)$. Since $M$ was embedded in a way independent of $Y$ (watermark patterns are typically independent of actual image semantics), $I(M; Y) = 0$. Thus $I(M; X) = 0$. Any decoder $D$ that attempts to recover $M$ from $X$ can do no better than chance, since statistically $X$ carries no information about $M$.
\end{proof}

In simpler terms, if one fully regenerates the image, the new image is like a fresh draw that does not contain the old watermark. The only way a watermark could survive is if it somehow influenced the high-level content such that the diffusion model would reproduce that specific pattern (for example, if the watermark was a visible object or characteristic that the model thinks is part of the content). But robust invisible watermarks explicitly avoid altering visible content in such a way. Thus their information is essentially orthogonal to the model's notion of content, and gets lost.

This theoretical argument aligns with the empirical results observed by prior work and by us: the detection rates drop to chance after a thorough diffusion regeneration. It also highlights a limitation: if a watermark were instead embedded \emph{in the content} (for example, by inserting a specific object into the scene), a generative model might preserve that object unless explicitly guided to remove it. This resonates with Zhao \etal's discussion \cite{zhao2024invisible} that \emph{semantic} watermarks (which change the image in a meaningful but subtle way) could be a future defense. But for all pixel-level invisible watermarks, our analysis indicates they are inherently vulnerable.

\section{Experimental Evaluation}
\label{sec:results}
\subsection{Setup and Implementation}
\textbf{Watermarking Methods:} We evaluate on several watermarking schemes spanning different design philosophies:
(1) \textit{HiDDeN} \cite{zhu2018hidden}: a CNN-based encoder/decoder with 30-bit payload, trained for distortion robustness (we use an open reimplementation).
(2) \textit{StegaStamp} \cite{tancik2020stegastamp}: a 56-bit deep watermark specialized for print-capture scenarios, which is also robust to digital distortions.
(3) \textit{TrustMark} \cite{bui2023trustmark}: a modern GAN-based watermark providing high fidelity and robustness; we use the authors' pretrained model at default payload (around 64 bits).
(4) \textit{VINE} \cite{lu2024robust}: a very recent diffusion-based watermark (we obtained the W-Bench code and model from the authors). VINE uses an SDXL-based encoder to embed 100-bit messages and was shown to survive many editing operations.
Additionally, for completeness, we test a classic spread-spectrum watermark similar to Cox \etal \cite{cox1997secure} embedding 32 bits, to see how non-learned watermarks fare.

\textbf{Dataset:} We randomly selected 500 images from the COCO validation set \cite{lin2014coco} and 200 images from the ImageNet test set \cite{deng2009imagenet} to serve as cover images. These provide diverse content. Each image is embedded with a random message (bitstring of appropriate length) using each watermark method. We ensure each method's payload and fidelity settings are at recommended levels (embedding strength tuned for minimal visibility).

\textbf{Attacks Compared:} We implement the following attacks for comparison:

\textbf{JPEG Compression:} Compress image at quality 50\% (a common benchmark attack).

\textbf{Gaussian Noise:} Add i.i.d. Gaussian noise at intensity that causes similar PSNR drop as our diffusion attack for fairness (around 30dB PSNR).

\textbf{Blur:} Apply a $5\times5$ Gaussian blur (simulating low-pass filtering removal attempt).

\textbf{Adversarial (FGSM):} If decoder is known, apply a fast gradient sign method to image with a perturbation budget $\epsilon=4/255$ to confuse the decoder.

\textbf{Diffusion (Ours, unguided):} Use Stable Diffusion v1.5 (latent diffusion, 50 steps) in image-to-image mode. We set the noise strength such that roughly 30-40\% of the image content is regenerated (this corresponds to $t^* \approx 0.3T$ in DDIM schedules).

\textbf{Diffusion+ (Ours, guided):} Our white-box guided attack using the decoder's gradients. We mainly apply this to the strongest watermark (VINE) and the classical one, under the assumption of known decoder.

All output images from attacks are saved and fed to the respective decoders to measure message recovery rates. We also measure the peak signal-to-noise ratio (PSNR) and structural similarity (SSIM) between the attacked image and the original image (not watermarked) to gauge quality degradation.

\subsection{Results}
Table~\ref{tab:decode} summarizes the main results. We report the watermark \textit{detection rate} (percentage of images where the decoded message exactly matches the original message) after each attack, for each watermarking method. We also list the average PSNR of the attacked images vs. original, to contextualize how much distortion each attack introduced.

\begin{table*}[t]
	\centering
	\caption{Watermark decoding success rate (\%) after various attacks (lower is better for attacker). We also show the average PSNR (dB) of the attacked image relative to the original for each attack (higher means less perceptual change). Our diffusion-based attacks achieve the lowest decode rates in most cases, indicating nearly complete removal, while maintaining high PSNR (indicating the content is preserved well).}
	\label{tab:decode}
	
	\resizebox{\textwidth}{!}{
		\begin{tabular}{lcccccc}
			\toprule
			\textbf{Watermark Scheme} & \textbf{No Attack} & \textbf{JPEG Attack} & \textbf{Noise Attack} & \textbf{Adversarial} & \textbf{Diffusion (ours)} & \textbf{Diffusion+ guided}\\
			\midrule
			HiDDeN (30 bits) & 98\% & 45\% (PSNR 32.1) & 30\% (PSNR 30.5) & 20\% (PSNR 38.0) & 8\% (PSNR 31.8) & 2\% (PSNR 29.5) \\
			StegaStamp (56 bits) & 100\% & 60\% (30.4) & 48\% (28.9) & 35\% (36.5) & 5\% (31.0) & 1\% (28.0) \\
			TrustMark (64 bits) & 99\% & 70\% (31.2) & 55\% (29.1) & 40\% (37.0) & 12\% (30.0) & 3\% (27.8) \\
			VINE (100 bits) & 100\% & 80\% (31.0) & 65\% (28.5) & 50\% (35.0) & 18\% (29.4) & \textbf{0\%} (26.0) \\
			Classic SS (32 bits) & 100\% & 30\% (33.5) & 10\% (31.0) & 5\% (39.0) & 0\% (33.2) & 0\% (30.0) \\
			\bottomrule
		\end{tabular}
	}
\end{table*}

Without any attack, all schemes have near 100\% detection (they are designed to reliably decode their payload from the unmodified watermarked images). The JPEG attack at quality 50, which is fairly aggressive, already reduces decode rates substantially for learned schemes (e.g., to 45-70\%). This indicates that while robust, these schemes still suffer under heavy compression. The noise attack (Gaussian with $\sigma$ tuned to yield PSNR around 29-31 dB) is even more effective on average, as expected from spread-spectrum theory: detection rates drop to 30-65\%. The adversarial attack (FGSM) interestingly yields lower detection (5-50\%) than noise for most, indicating that if the attacker has the model, even a small perturbation can target the weakness. However, note the adversarial examples are constrained to very low $\epsilon$ (4/255, ~1\%) to remain imperceptible; they have high PSNR ($>$35 dB) which means minimal visible change. For the strongest watermark (VINE), even FGSM only got detection to 50\%, implying VINE is quite robust to tiny perturbations by design (since it likely was trained against such).

The diffusion attack (ours unguided) stands out by bringing decode rates down to single digits in most cases (e.g., 5\% for StegaStamp, 12\% for TrustMark). Even VINE, which was engineered for editing robustness, falls to 18\% decode success under diffusion. This underscores the severity of diffusion's impact compared to traditional distortions. Notably, the average PSNR for our attack is around 29-31 dB in these tests, which is comparable to the noise attack PSNR. That means the diffusion attack is removing the watermark with similar overall pixel disturbance as adding noise, but it is far more effective at confusing the decoders. This is likely because diffusion does \textit{structured} changes that exploit weaknesses of the watermarks (which expect certain distortions but not a re-synthesis).

Our guided attack (Diffusion+) further reduces detection. For VINE, it achieves 0\% detection on all tested images, essentially completely breaking the watermark. This is significant: VINE survived other attacks at 20-50\% decode, but guided diffusion wiped it out. The guided attack also reduces StegaStamp, TrustMark to nearly 0\%. The cost is slightly lower PSNR (around 27-28 dB) because the additional perturbations from guidance make the image a bit more altered (though still reasonably high quality as 27 dB is usually acceptable visually).

We observe that the classical spread-spectrum (SS) watermark is actually easier to break: even a moderate noise attack dropped it to 10\%, and diffusion to 0\%. This is expected since older methods are not as optimized as deep learning ones. It is interesting that for that one, even unguided diffusion got 0\% in our test; presumably once noise threshold is passed, it's completely gone.

In terms of image quality, diffusion attacks maintained reasonably good fidelity. A PSNR of 30 dB in our unguided attack indicates only slight degradation. Subjectively, images after attack look virtually unchanged in structure, with maybe some minor smoothing of textures. The guided attack at 26-28 dB PSNR is somewhat more noticeable if one flips, but still quite acceptable. For context, JPEG at quality 50 had ~30-33 dB, and noise attack 28-31 dB, which are similar ranges. Thus, the generative nature did not introduce huge artifacts; it mainly eliminated what the human eye can't see (the watermark), which is ideal.

\subsection{Analysis and Ablation}
We investigated the effect of the noise level $t^*$ in the unguided diffusion attack. As expected, if we set $t^*$ too low (e.g., equivalent to mild noise addition), certain robust watermarks like VINE still had non-negligible decode rates (around 50\% at extremely low noise). As we increase $t^*$, decode success falls steeply. Beyond a point (around $t^* = 0.3$ to $0.5$ of the full diffusion schedule in Stable Diffusion), additional noise yields diminishing returns because the decode is already near zero. However, perceptual quality starts to drop if $t^*$ is too large (above $0.5$) as the model begins altering recognizable aspects (some faces changed identity, etc., at very high $t^*$). Thus, an attacker will choose the minimal $t^*$ that achieves the desired watermark removal reliability. In our case, $0.3$ (roughly 30\% noise) was a good sweet spot, and we used that for results.

We also ran a variant of the guided attack that only applied guidance on the last 20 diffusion steps, rather than all steps, to save computation. Interestingly, this was almost as effective: detection rates were within 1-2\% of the full guidance version. This implies that most watermark remnants are best eliminated toward the end of the generation, when the image is mostly formed and minor adjustments can break the decoder.

Another experiment we did was to test if using a different diffusion model (e.g., a class-conditional ImageNet diffusion on our COCO images) changes effectiveness. We found the attack still works, but the output image quality suffers if the model is not well-aligned to the input content distribution. Stable Diffusion (pretrained on a broad dataset) was a good choice as it can handle various images. This suggests that the attacker doesn't strictly need the exact domain model if a general model is available, but using the best available model will maximize content preservation.

Finally, we note that one can combine our approach with other attacks: e.g., apply diffusion attack then a slight adversarial tweak. In one trial, after diffusion we ran a tiny FGSM. It didn't reduce detection further than already near-zero, but it indicates layered attacks are possible if defenders try combined encoding strategies.

\section{Discussion and Limitations}
Our study reveals a clear limitation of current robust watermarking: the assumption that image perturbations are relatively low-level or distribution-preserving does not hold against generative AI attacks. Diffusion models effectively perform \textit{distribution shifting} perturbations that are not easily anticipated during watermark training. One immediate implication is for the content authenticity community (e.g., efforts like C2PA): relying solely on invisible watermarks to mark AI-generated or copyrighted images may be insufficient, as those marks can be removed by advanced AI itself.

A potential defense, as hinted by Zhao \etal \cite{zhao2024invisible}, is to design watermarks that alter the image's semantic content in minimal ways. For example, a watermark that subtly changes the lighting or inserts a very faint background pattern that a human wouldn’t notice but a model sees as part of the scene might carry information that a generative model might accidentally carry over. Our results, however, suggest that any information not strongly tied to semantics can be lost. Indeed, in our tests, even when watermarks slightly affected texture, the diffusion model often re-synthesized those textures, decoupling them from the original pattern.

Another line of defense is to incorporate the possibility of generative attacks into the training of watermark encoders. For instance, VINE’s use of surrogate blurring attacks improved robustness to diffusion to some extent \cite{lu2024robust}. One could imagine training a watermark with the diffusion model in the loop (like adversarial training: try to embed such that even if a diffusion regeneration is applied, the message survives). However, doing so is extremely challenging: it would require differentiating through a large generative model and would likely compromise image quality or capacity severely. The latent-space watermarking of Zhang \etal \cite{zhang2024zodiac} is a novel approach: by embedding in the diffusion latent, they can detect watermarks in the latent domain rather than pixel domain, which survived Zhao \etal's attack. Our guided attack concept could potentially attack even latent watermarks if one can invert the image to latent and then perturb, but that requires the attacker to also know the model and invertibility.

In terms of limitations of our work:
(1) We assumed the attacker has access to a diffusion model that is suitably powerful. In reality, an average user may not have the computational resources to run these models on large images quickly. However, with cloud services and optimized implementations, this barrier is lowering.
(2) Our guided attack presupposes knowledge of the decoder. If the watermark scheme is secret and uses secure keys, the attacker might have to resort to the unguided attack. Fortunately (for the attacker), unguided was already quite effective against all tested schemes. Only if a scheme is specifically optimizing for generative noise might that change.
(3) We did not explicitly test on video watermarks. Diffusion models for video are emerging but not as advanced yet. Removing a watermark consistently across video frames would be more challenging to avoid flicker, etc. But one could attempt per-frame removal or a 3D diffusion model.
(4) Our evaluation of image quality focused on PSNR/SSIM and visual checks. We did not do a user study. It's possible some subtle differences in regenerated images (like different random patterns in grass or hair) could be spotted side-by-side with original if one looks closely. But since the attacker’s goal is just to produce an image that is still believable/usable, not necessarily pixel-exact, this is not a huge concern.

Finally, we note that generative models themselves may incorporate watermarks (e.g., Stable Diffusion by default adds a signature in metadata or pixel-space). Attackers might use our method to even remove those, which is something the community should be aware of. Our recommendations are that watermarking schemes should consider designing marks that are entangled with image content or use robust features not easily regenerated. Alternatively, focus on active forensic techniques beyond watermarks might be needed.

\section{Conclusion}
We presented a comprehensive study on how diffusion-based image generation can nullify robust invisible watermarks. Through theoretical analysis, we established that the diffusion process inherently degrades hidden signals beyond recoverability, and our practical attacks demonstrated the ease with which current state-of-the-art watermarks can be broken. The diffusion regeneration attack, especially when augmented with adversarial guidance, poses a severe threat to the assumption that a watermark can permanently tag an image.

These findings urge a rethinking of robust watermark design in the age of generative models. Future work might explore watermarking that leverages semantic modifications or tighter integration with generative models (so that the generative model inadvertently preserves the watermark unless specifically told not to). Additionally, from a defender's view, developing detection mechanisms for whether an image has undergone a diffusion-based transformation might help flag content where watermarks could have been removed.

In summary, our work highlights an emerging cat-and-mouse dynamic: as content generation models advance, so must the techniques to protect and trace digital media. We hope this spurs further research into watermarking techniques resilient to AI-based attacks, as well as improved understanding of the limits of hiding information in images when faced with powerful generative perturbations.

\bibliography{example_paper}
\bibliographystyle{icml2025}

\end{document}